\newcommand{\keywords}[1]{\par\addvspace\baselineskip
\noindent\keywordname\enspace\ignorespaces#1}
\begin{document}

\mainmatter  

\title{Toward an Efficient Multi-class Classification in an Open Universe}

\titlerunning{Toward an Efficient Multi-class Classification in an Open Universe}

%
%
\author{Wajdi Dhifli\and Abdoulaye Banir\'e Diallo
\thanks{This paper has received the best paper award at the 12th International Conference on Machine Learning and Data Mining MLDM 2016, New York, USA.
}
}
\authorrunning{Dhifli et \textit{al.}}

\institute{Department of Computer Science, University of Quebec At Montreal,\\
PO box 8888, Downtown station, Montreal (Quebec) Canada, H3C 3P8.\\
\mailsa\\
\mailsb\\
}

%
%

\toctitle{Toward an Efficient Multi-class Classification in an Open Universe}
\tocauthor{Wajdi Dhifli, Abdoulaye Banir\'e Diallo}
\maketitle

\begin{abstract}
Classification is a fundamental task in machine learning and data mining. Existing classification methods are designed to classify unknown instances within a set of previously known training classes. Such a classification takes the form of a prediction within a closed-set of classes. However, a more realistic scenario that fits real-world applications is to consider the possibility of encountering instances that do not belong to any of the training classes, $i.e.$, an open-set classification. In such situation, existing closed-set classifiers will assign a training label to these instances resulting in a misclassification. In this paper, we introduce \textit{Galaxy-X}, a novel multi-class classification approach for open-set recognition problems. For each class of the training set, Galaxy-X creates a minimum bounding hyper-sphere that encompasses the distribution of the class by enclosing all of its instances. In such manner, our method is able to distinguish instances resembling previously seen classes from those that are of unknown ones. To adequately evaluate open-set classification, we introduce a novel evaluation procedure. Experimental results on benchmark datasets show the efficiency of our approach in classifying novel instances from known as well as unknown classes.

\keywords{Multi-class classification, open-set classification, galaxy-X}
\end{abstract}

\section{Introduction}
Classification is a central task in machine learning and data mining. It has an extremely large number of domains of application ranging from finance and marketing, to bioinformatics and computer vision \cite{alex_2012}\cite{maxwell_2015}. 
The most conventional classification scenario is to train a classifier on a set of instances of known classes, $i.e.$ the training set, then to predict the class label of unknown instances within the same set of already seen classes \cite{quinlan_1987}\cite{cortes_1995}\cite{tax_2001}\cite{rocha_2014}. Such a classification takes the form of a prediction within a \textit{closed-set} of labels. However, due to the growth of data collection in many real-world applications, the training data could only represent a partial view of the domain and thus it may not contain training examples for all possible classes. 
%
In such scenario, the classifier may be confronted to observations that do not belong to any of the training classes. In this context, the classification becomes within an \textit{open-set} of labels where the presence of observations from unseen classes is possible. In \textit{open-set classification}, traditional closed-set classifiers will fail in the prediction for observations of unseen labels. 

In applications where the user is interested in the identification of few classes from a large classification universe, the most conventional way is to fuse the set of uninteresting classes into one single large negative set which usually makes the dataset highly unbalanced. In this case, the classifier becomes overwhelmed by negative observations which hinders the discrimination of positive classes. Some attempts have emerged trying to remedy such situation, mainly based on the sampling of a subset of representatives from the negatives \cite{raskutti_2004}. However, it is very difficult and somehow unfair to reduce all the negatives into a small summary that may not be sufficient to represent all possibilities. A more appropriate transformation of such problem is an open-set classification where only positive classes are modeled in training and any observation that remarkably deviates from the distribution of known classes is rejected.

Several domains of application of open-set classification exist. In bioinformatics, the advances in sequencing technology have made the acquisition of genomic sequences fast and easy \cite{maxwell_2015}. While a virologist analyses the genomic sequence of a virus, he always keeps the possibility that the latter can be an unknown one. A closed-set classification does not help in such situation since the new virus will be assigned a previously known type (label) preventing a discovery. Another example of domains of application is computer vision \cite{walter_2013}. For instance, in face identification, the system is only interested in recognizing a number of faces within an infinite set of possibilities, $i.e.$ an open universe of classes. In similar applications, the classifier should be able to create a decision boundary that envelops the class instances and resembles its distribution such that whatever lies outside of the class boundary is rejected.

In this paper, we introduce \textit{Galaxy-X}, an open-set multi-class classification approach. For each class, Galaxy-X creates a minimum bounding hyper-sphere that encloses all of its instances. In such manner, it is able to distinguish between novel instances that fit the distribution of a known class from those that diverge from it. Galaxy-X introduces a softening parameter for the adjustment of the minimum bounding hyper-spheres to add more generalization or specialization to the classification models. To properly evaluate open-set classification, we also propose a novel evaluation technique, namely \textit{Leave-P-Class-Out-Cross-Validation}. Experimental evaluations on benchmark datasets show the efficiency of Galaxy-X in open-set multi-class classification.


\section{Related Work}\label{sec:related_work}

Very few works have addressed open-set classification in the literature. Scheirer et al. presented a formalization of open-set classification and showed its importance in real-world applications \cite{walter_2013}. The authors discussed the bias related to the evaluation of learning approaches and how recognition accuracies are inflated in closed-set scenarios, leading to an over-estimated confidence in the evaluated approaches \cite{walter_2013}\cite{torralba_2011}. 
In binary closed-set classification, SVM defines a hyper-plane that best separates between the two classes. 
Scheirer et al. proposed an SVM based open-set multi-class classifier termed one-vs.-set SVM \cite{walter_2013}, which 
defines an additional hyper-plane for each class such that the latter becomes delimited by two hyper-planes in feature space. A testing instance is then classified as of one training class or as of an unknown class, depending on its projection in feature space. Although this strategy delimits each training class from two sides, the class "acceptance space" is left unlimited within the region between the hyper-planes and no additional separator is provided to prevent misclassifying unknown instances that are within the class hyper-planes bound but far away from the distribution of its training instances in feature space.
%

Another important learning approach for open-set problems is one-class classification. The most known technique is one-class SVM \cite{tax_2001} where the classifier is trained only on a single positive class and the aim is to define a contour that encloses it from the rest of the classification universe. Any instance that lies outside of the defined class boundary is considered as negative. One-class classification is mainly used in outlier and novelty detection. It is limited to single class classification and cannot be directly used in multi-class classification.

One-vs.-one and one-vs.-rest \cite{rocha_2014} are popular techniques for multi-class classification. One-vs.-one constructs a model for each pair of classes. Then, test examples are evaluated against all the constructed models. A voting scheme is applied and the predicted label is the one with the highest number of votes. One-vs.-rest creates a single classifier per class, with the examples of that class as positives and all the other examples as negatives. All classifiers are applied on a test example and the predicted label is the one with the highest score. It is possible to consider one-vs.-rest for open-set classification by iteratively using each class as the positive training set, and all the remaining (known) classes as the rest of the universe. However, in open-set classification, the classification universe is unlimited and thus the classifier will suffer a negative set bias. 

Based on \cite{Landgrebe_2005} and \cite{Tax_2008}, it is possible to build a simple open-set multi-class classifier using a combination of a one-class classifier and a multi-class classifier. In the first step, all training classes are fused into a single large "super-class" and the one-class classifier is trained on the entire super-class. In this setting, the one-class classifier will directly reject and label as unknown all testing instances that do not fit the distribution of all known training classes. In the second step, the multi-class classifier is trained on the original training classes and is used to classify instances that were not rejected by the one-class classifier. 

\section{Galaxy-X}\label{sec:galaxy-x}
\subsection{Preliminaries and Problem Definition}
Let $\mathcal{D}$ be a training set of $n$ instances and $\mathcal{L}$ be the set of possible labels in $\mathcal{D}$, $\mathcal{D}=\{(x_1, l_1), ..., (x_n, l_n)\}$ where $l_i \in \mathcal{L}$ and $x_i$ is defined by a vector in $d$-dimensional space, $\forall i \in [1,n]$. 
In open-set classification, the classifier should be able to assign to a test instance $x$ a label $l_x$ that is known $l_x \in \mathcal{L}$ or that is \textit{unknown}, $i.e.$, $l_x \in \mathcal{L}\cup \{"unknown"\}$. In this setting, it is necessary to define a boundary envelop for each class in order to make it distinguishable from other unknown possibilities. The definition of such boundary is difficult as the delimited-class-space should reflect the class distribution by enclosing as many as possible of its instances while keeping outside as many as possible of the rest of instances. Indeed, this can be seen as an optimization problem of the classification error that considers a trade-off between generalization and specialization. As a possible solution, we define the minimum bounding hyper-sphere $\mathcal{M}$ as the smallest hyper-sphere that circumscribes all instances of a considered class. For a class $\mathcal{D}_l \subseteq \mathcal{D}$ of label $l\in \mathcal{L}$, the hyper-sphere $\mathcal{M}_l$ represents the class model that resembles the distribution of $\mathcal{D}_l$ instances. Each class model $\mathcal{M}_l$ ($\forall l\in \mathcal{L}$) is defined as: 
\begin{equation}\label{eq:model}
\mathcal{M}_l = (c_l, r_l), \forall l \in \mathcal{L}, r_l>0
\end{equation} 
where $c_l$ is the center of $\mathcal{M}_l$ hyper-sphere (the class mean $\overline{x}$):
\begin{equation}\label{eq:center}
c_l = \overline{x}, \forall x_i \in \mathcal{D}_l, \forall l \in \mathcal{L}
\end{equation}
and $r_l$ is the radius of $\mathcal{M}_l$ hyper-sphere, $i.e.$, the distance between $c_l$ and the most divergent instance from $\mathcal{D}_l$ (the class variance):
\begin{equation}\label{eq:radius}
r_l = \max{(\Delta(x_i, c_l))}, \forall x_i \in \mathcal{D}_l, \forall l \in \mathcal{L}
\end{equation}
where $\Delta$ is a function returning the distance between $c_l$ and $x_i$ with respect to a distance measure. In a multi-class scenario, the resulting representation space is similar to a galaxy of classes in an open universe of possibilities.
%
\subsection{The Training Process}
Algorithm \ref{alg:Galaxy-X_training} describes the training phase in Galaxy-X. Given a training set $\mathcal{D}$ and a training label set $\mathcal{L}$ over $\mathcal{D}$, we create a model $\mathcal{M}_l$ for each class $l$ $\in \mathcal{L}$ that is composed of the class minimum bounding hyper-sphere center $c_l$ and radius $r_l$ as defined in Equations \ref{eq:model}, \ref{eq:center} and \ref{eq:radius}.
\begin{algorithm}[!t]
\caption{Galaxy-X: The training process}\label{alg:Galaxy-X_training}
\LinesNumbered
\KwData{$\mathcal{D}$: training set, $\mathcal{L}$: training labels}
\KwResult{$\mathcal{M}$: set of class models}
\SetKwFunction{Boundary}{Boundary}\SetKwFunction{Centeroid}{Centeroid}

\Begin{
$\mathcal{M} \leftarrow \emptyset$\\
\ForEach{($l\in \mathcal{L}$)}{
$c_l \leftarrow $ \Centeroid($D_l$)\\
$r_l \leftarrow $ \Boundary($D_l$)\\
$\mathcal{M}_l  \leftarrow(c_l, r_l)$\\
$\mathcal{M} \leftarrow \mathcal{M} \cup \mathcal{M}_l$\\
	}
}
\end{algorithm}
\subsection{Acceptance of Instances}
In open-set classification, the classifier should discriminate between instances of known classes and reject those of unknown ones. We define a score of acceptance of an instance by a class depending on its position from the class boundary. 
\begin{definition}(Acceptance Score)\label{def:acc_score}{ The acceptance score, denoted by $\phi$, for an instance $x$ by a class of label $l \in \mathcal{L}$, is defined as follows:
\begin{equation}
\phi(x, l) =  1-\frac{\Delta(x, c_l)}{r_l}
\end{equation}
where $\Delta$ is a distance measure, $c_l$ is the class center, and $r_l$ is its radius.
}\end{definition}
The acceptance score is defined in $]-\infty, 1]$ ($i.e. \phi \in{\rm I\!R}_{\leq 1}$). It allows to decide whether an instance is accepted or rejected by a class. It is interpreted as follows:
\begin{itemize}
\item $\phi\in [0, 1]$: the query instance $x$ is accepted by the class $l$:
	\begin{itemize}
		\item $\phi \in ]0,1[$: $x$ is inside the hyper-sphere of $l$,
		\item $\phi = 1$: $x$ is in the class center, $i.e.$, $x = c_l$,
		\item $\phi = 0$: $x$ is on the boundary, $i.e.$, $distance(x, c_l)=r_l$.
	\end{itemize}
\item $\phi < 0$: $x$ is out of the class boundary (rejected). The lower is the score, the farther is $x$ from the distribution of $l$.
\end{itemize}

Galaxy-X minimizes the classification error (\textit{Err}) that can be formulated as:
\begin{equation}\label{eq:classif_error}
Err = \sum_{\forall l\in\mathcal{L}}\sum_{\forall x_i\in\mathcal{D}} \psi(x, l)
\end{equation}
where $\psi\in[0,1]$ is a binary function that is defined as follows: 
\begin{equation} \label{eq:F_acceptance_score}
  \psi(x, l)=
  \begin{cases}
    1, & \text{if } \phi(x, l)\in [0,1] \text{ and } x\in\mathcal{D}_l , \text{ or} \\
    & \text{if } \phi(x, l) < 0 \text{ and } x\notin\mathcal{D}_l \\
    0, & \text{otherwise.}
  \end{cases}
\end{equation}

\subsection{The Classification Process}
\subsubsection{Filtering Prediction Candidate labels}\label{sec:filtering_labels}
Based on the acceptance score, it is possible, for a query instance $x$, to filter a subset of candidate labels $\mathcal{L}_x \subseteq \mathcal{L}$. The latter is the subset of remaining possible candidates, such that if $\mathcal{L}_x \neq \emptyset$, then the predicted label $l_x \in \mathcal{L}_x$. Algorithm \ref{alg:Galaxy-X_filtering} describes the general algorithm of filtering of the candidate labels. It starts with an empty set of candidate labels. Given the training class models, it tests whether the query instance $x$ is accepted or rejected by each training class according to Definition \ref{def:acc_score}. Indeed, it rejects all class labels where $x$ lies outside of the class boundary and only the subset of labels where $x$ is accepted is retained as the set of candidate labels for prediction.
\begin{algorithm}[!t]
\caption{Galaxy-X: The label filtering process}
\label{alg:Galaxy-X_filtering}
\LinesNumbered
\KwData{$\mathcal{M}$: set of class models, $x$: test instance}
\KwResult{$\mathcal{L}_x$: retained candidate labels}
\SetKwFunction{AcceptanceScore}{AcceptanceScore}
\Begin{
$\mathcal{L}_x \leftarrow \emptyset$\\
\ForEach{(class model $\mathcal{M}_l\in \mathcal{M}$)}{
\If{$\phi(x, l) \geq 0$}{
$\mathcal{L}_x \leftarrow \mathcal{L}_x$ $\cup$ $l$\\
}
}
}
\end{algorithm}

\subsubsection{Handling Class Overlapping}\label{sec:class_overlapping}
It is possible to obtain a set of disjoint hyper-spheres in the case where training classes are perfectly separable. In such case, if a query instance $x$ is circumscribed a hyper-sphere then $x$ takes the latter's class label otherwise $x$ is considered as of an unknown class. 
However, in real-world cases the hyper-spheres may overlap mainly in the presence of high inter-class similarity. 
In fact, the overlapping space between classes resembles a local closed-set classification within an open-set classification context. In this case, we train a local closed-set classifier only on the overlapping classes then we use it for only classifying query instances that are within the overlapping space, $i.e.$, instances that are accepted by multiple classes in Algorithm \ref{alg:Galaxy-X_filtering}, $|\mathcal{L}_x| > 1$. 

\subsubsection{The Classification Process}
Algorithm \ref{alg:Galaxy-X_classification} describes the classification process of Galaxy-X. The first step in prediction is the filtering of candidate labels according to Algorithm \ref{alg:Galaxy-X_filtering}. If the retained set of candidate labels is an empty set $\mathcal{F}^{L}_{x} = \emptyset$, then the query instance $x$ do not fit any training class. Thus, the predicted label $l_x$ is set to \textit{"Unknown"}. If $|\mathcal{F}^{L}_{x}| = 1$, $x$ is only accepted by one training class. In this case, the predicted label is that single filtered possibility $l_x \leftarrow \mathcal{F}^{L}_{x}$. In the case where $|\mathcal{F}^{L}_{x}| > 1$, $x$ shares similarities with more than one class and its feature vector is projected in the overlapping area between the hyper-spheres of the retained class labels. 
As this situation presents a conventional closed-set classification, a closed-set classifier $\mathcal{E}$ is locally trained only on the retained classes of $\mathcal{F}^{L}_{x}$, then $\mathcal{E}$ is used to predict the class label $l_x$ of $x$ such that $l_x \leftarrow \mathcal{E}(x)$ and $l_x \in \mathcal{F}^{L}_{x}$.%
\begin{algorithm}[!t]
\caption{Galaxy-X: The classification process}
\label{alg:Galaxy-X_classification}
\LinesNumbered
\KwData{$\mathcal{M}$: set of class models, $\mathcal{E}$: local closed-set classifier, $x$: test instance}
\KwResult{$l_x$: predicted label for $x$}
\SetKwFunction{FilterLabels}{FilterLabels}
\Begin{
$\mathcal{F}^{L}_{x} \leftarrow \FilterLabels(\mathcal{M}, x)$ \\
\eIf{$\mathcal{F}^{L}_{x} = \emptyset$}{
$l_x \leftarrow $ "Unknown"\\
}{
\eIf{$|\mathcal{F}^{L}_{x}| = 1$}{
$l_x \leftarrow \mathcal{F}^{L}_{x}$\\
}{
Train($\mathcal{E}$, $\mathcal{F}^{L}_{x}$)\\
$l_x \leftarrow \mathcal{E}(x)$\\
}
}
}
\end{algorithm}
\subsection{Softening Class Boundaries}\label{sec:softening}
In order to add flexibility to the models, we introduce a softening parameter $\delta\in {\rm I\!R}$ that allows to perform a distortion of the class boundary. Indeed, it allows to add more generalization or specialization to the classification models. Figure \ref{fig:softening_shrinking_class_boundary} shows respectively examples of positive and negative softening of a class boundary. In Figure \ref{fig:softening_shrinking_class_boundary}(a) a positive softening extends the minimum bounding hyper-sphere allowing to add more generalization to the model. Extending the class boundary may help in detecting test instances that are from the same class but slightly deviate from the training instances. In contrast, in Figure \ref{fig:softening_shrinking_class_boundary}(b) a negative softening is performed on the hyper-sphere by shrinking its radius which adds more specialization to the class model. Shrinking the class boundary may help in rejecting instances that do not belong to the class but are within the hyper-sphere near to the class boundary. In addition, it can be used to alleviate or remove overlapping between classes. If the softening is performed, its value has to be carefully chosen as an over-generalization engenders many false positives. In contrast, an over-specialization makes the model under-fit the class.
\begin{figure}[!t]
\centering
\includegraphics[width=0.65\textwidth]{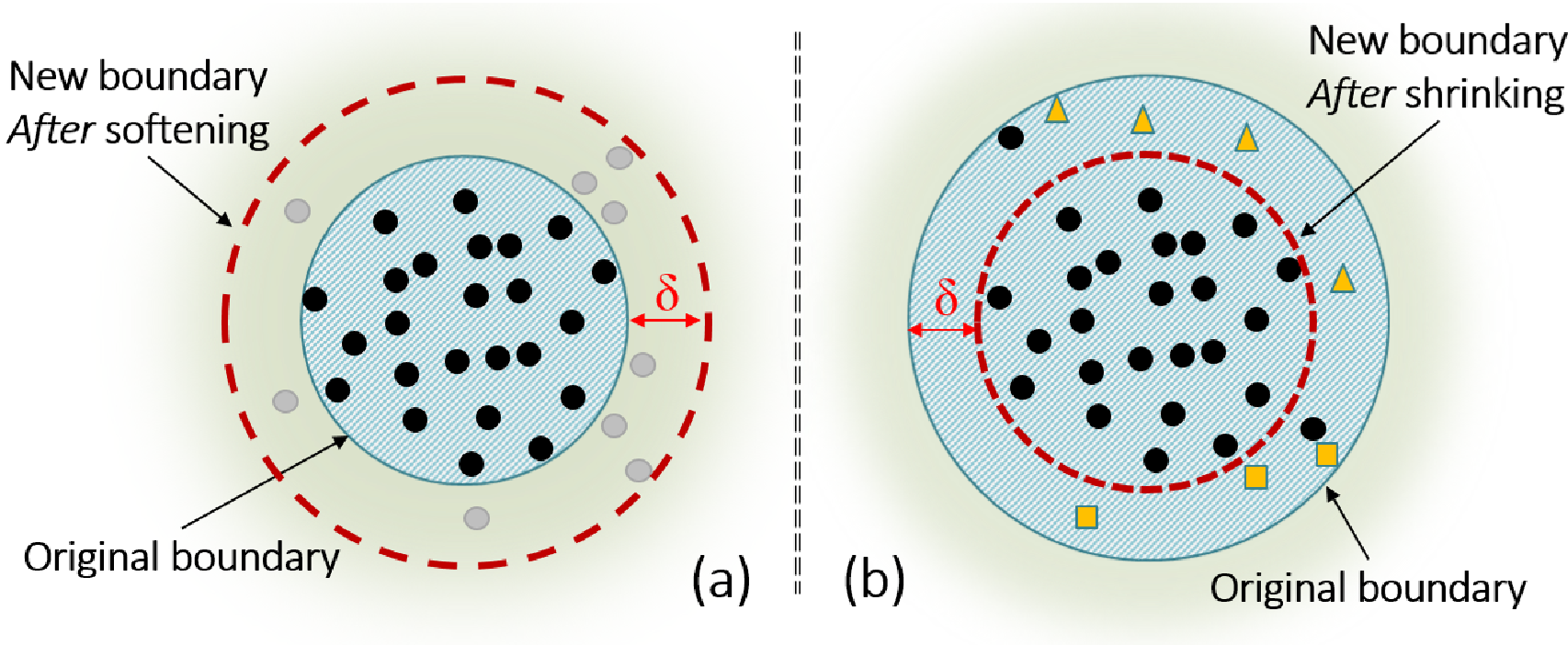}
\caption{Example of positive (a) and negative (b) softening of the class boundary. Black dots are the training class instances. Gray dots are test instances from the class which slightly differ from its distribution. Squares and triangles are negative examples and $\delta$ is the softening magnitude. Softening the boundary can help detecting true positives (a) or rejecting false positives (b).
}
\label{fig:softening_shrinking_class_boundary}
\end{figure}
%
\begin{definition}(Soft Acceptance Score)\label{def:soft_acc_score}{ The softening can be introduced in the acceptance score. We define the soft acceptance score ($\varphi$) as follows:
\begin{equation}\label{eq:soft_acceptance_score}
\varphi(x, l, \delta) =  1-\frac{\Delta(x, c_l)}{r_l + \delta} 
\end{equation}
where $\delta$ is the softening parameter, $distance$ is an appropriate distance measure, $c_l$ is the center of the class of label $l$, and $r_l$ is its radius.
}\end{definition}
Similarly to $\phi$, $\varphi$ is defined in $]-\infty, 1]$ and is interpreted in the same way. 
%
It is worth noting that softening can also be introduced in training (instead of $\varphi$) in the definition of class boundaries such that line 5 in Algorithm \ref{alg:Galaxy-X_training} becomes $r_l \leftarrow $ \Boundary($D_l$) + $\delta$. According to Equations \ref{eq:classif_error} and \ref{eq:soft_acceptance_score}, the optimal $\delta$ value, denoted $\delta^*$, should be the one that minimizes the classification error as follows:
\newcommand{\argminF}{\mathop{\mathrm{argmin}}\limits} 
\begin{equation}
\delta^* = \argminF_\delta Err = \argminF_\delta \sum_{\forall l\in\mathcal{L}}\sum_{\forall x_i\in\mathcal{D}} \Psi(x, l, \delta)
\end{equation}
where $\Psi$ is defined similarly to Equation \ref{eq:F_acceptance_score} but based on the $\varphi$. 

\begin{lemma} Given a classification scenario $S_\mathcal{D}$, a classification performance evaluation technique Perf, and a closed-set classifier $\mathcal{X}$: \begin{equation*}
\forall S_\mathcal{D}, Perf(\textit{Galaxy-}\mathcal{X}, S_\mathcal{D})\geq Perf(\mathcal{X}, S_\mathcal{D})
\end{equation*}

\end{lemma}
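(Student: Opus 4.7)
The plan is to partition every test point into three cases according to the size of the candidate set $\mathcal{F}^{L}_{x}$ returned by Algorithm \ref{alg:Galaxy-X_filtering}, and to argue case by case that Galaxy-X equipped with $\mathcal{X}$ as its local classifier $\mathcal{E}$ never scores worse than $\mathcal{X}$ alone. I would first fix a scenario $S_\mathcal{D}=(\mathcal{D}_{\text{train}},\mathcal{D}_{\text{test}})$, denote the true label of a test point $x$ by $l^{*}(x)\in\mathcal{L}\cup\{\text{"Unknown"}\}$, and let $\hat{l}_{G}(x)$ and $\hat{l}_{\mathcal{X}}(x)$ be the predictions of Galaxy-X and of $\mathcal{X}$ respectively. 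Using Equation \ref{eq:classif_error}, it suffices to show $F(x,\hat{l}_{G})\ge F(x,\hat{l}_{\mathcal{X}})$ for every $x$, provided Perf is monotone in $F$; I would state this monotonicity as a mild standing hypothesis on Perf.

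The case analysis then proceeds as follows. In the \emph{empty-filter} case ($\mathcal{F}^{L}_{x}=\emptyset$), Galaxy-X outputs "Unknown". By Equations \ref{eq:center}--\ref{eq:radius} each hyper-sphere contains every training instance of its class, so this case is only triggered when $x$ lies outside every known class-boundary, which is precisely the open-set situation in which a closed-set $\mathcal{X}$ is forced to err while Galaxy-X is correct whenever $l^{*}(x)=\text{"Unknown"}$. In the \emph{singleton} case ($|\mathcal{F}^{L}_{x}|=1$), the retained label is the unique class whose hyper-sphere contains $x$; one shows that any decision $\mathcal{X}$ could make that differs from this label must place $x$ in a class whose hyper-sphere already rejected it, contradicting the containment property of the training class. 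In the \emph{overlap} case ($|\mathcal{F}^{L}_{x}|>1$), Galaxy-X runs $\mathcal{X}$ on exactly the sub-problem restricted to the admissible classes of $\mathcal{F}^{L}_{x}$, so $\hat{l}_{G}(x)$ is the prediction of $\mathcal{X}$ trained on a label subset that provably still contains $l^{*}(x)$.

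The main obstacle is the overlap case, because in general restricting a closed-set classifier to fewer labels may either help or hurt, depending on the classifier family (SVM margins, for instance, shift when negative classes are removed). I would therefore either (i) strengthen the hypothesis on $\mathcal{X}$ to a natural \emph{label-monotonicity} property — namely that removing labels that $x$ cannot belong to (because their hyper-spheres reject $x$) does not decrease accuracy on $x$ — or (ii) interpret Perf as taking the maximum over admissible retraining strategies, which makes the overlap-case inequality a tautology. Either route isolates the open-set handling of Case 1 as the sole source of Galaxy-X's strict advantage and turns the three-case argument into a complete proof. The lemma would then follow by summing the per-instance inequality over $\mathcal{D}_{\text{test}}$ and applying monotonicity of Perf.
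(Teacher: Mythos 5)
Your route is genuinely different from the paper's, and it has a gap that the paper's argument is specifically designed to avoid. The paper does not attempt per-instance dominance at all: its proof is a single ``simulation'' observation, namely that the softening parameter $\delta$ can always be pushed so high that every class's hyper-sphere accepts every test point, the filter step becomes vacuous, every instance lands in the overlap case, and Galaxy-$\mathcal{X}$ degenerates into running the local closed-set classifier $\mathcal{X}$ on the full label set. Since that degenerate configuration achieves exactly $Perf(\mathcal{X}, S_\mathcal{D})$, the $\delta$-optimized classifier can do no worse. The lemma is really a statement about the variant that optimizes $\delta$ (H-Galaxy-$\mathcal{X}$), and the entire content of the proof is that the baseline is in the feasible set of the optimization. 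Your proposal never invokes $\delta$ at all, and that is the missing idea.

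The concrete failure in your case analysis is the empty-filter and singleton cases. You argue that because each minimum bounding hyper-sphere contains every \emph{training} instance of its class (Equations \ref{eq:center}--\ref{eq:radius}), an empty candidate set can only occur for genuinely unknown instances. That does not follow: the spheres are fitted to the training split only, so a \emph{test} instance whose true label is a known class can lie outside its own class's sphere (all the more so with negative softening, which the paper uses by default). In that event Galaxy-$\mathcal{X}$ outputs ``Unknown'' and is wrong where $\mathcal{X}$ may be right, so the per-instance inequality $F(x,\hat{l}_G)\ge F(x,\hat{l}_{\mathcal{X}})$ is simply false for a fixed configuration. The same defect breaks the singleton case: the unique accepting sphere need not belong to the true class. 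No strengthening of hypotheses on $\mathcal{X}$ (label-monotonicity) or on $Perf$ repairs this, because the counterexample lives entirely in the rejection mechanism, not in the local retraining. If you want to salvage your decomposition, you must either condition on a separation/containment assumption for test points (which the paper does not make) or, as the paper does, quantify over $\delta$ and observe that the worst case collapses to the closed-set classifier.
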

\begin{proof}
In the worst case, the optimal softening value will be very high until the training models completely overlap resembling a closed-set classification. In this case, evaluation instances will be classified using the local closed-set classifier. Consequently, \textit{Perf}(Galaxy-$\mathcal{X}$, $S_\mathcal{D})=$ \textit{Perf}($\mathcal{X}$, $S_\mathcal{D})$.
\end{proof}

\section{Experimental Evaluation}\label{sec:experimental_evaluation}
Evaluating open-set multi-class methods requires proper measures and protocols.

\subsection{How Open is an Open-set Classification?}
We propose \textit{Openness} to quantify the openness of a classification scenario ($S_\mathcal{D}$). 
\begin{definition}(Openness)\label{def:openness}{ It measures the ratio of labels that are unseen in training but encountered in prediction to all the labels of the dataset $\mathcal{D}$. Formally:
\begin{equation}
openness(S_\mathcal{D}) = \frac{|Unseen Labels|}{|\mathcal{L}|}
\end{equation}
}\end{definition}
\textit{Openness} is defined in ${\rm I\!R}^+$. An \textit{openness} of 0 means that it is a closed-set classification, otherwise it is an open-set classification. Theoretically, the value of \textit{openness} can be even $+\infty$ which means an infinite set of possibilities. 
However, in practical cases, the number of test labels can usually be delimited. In our experiments, \textit{openness} $\in [0,1[$, where the open-set classification will be simulated from a dataset were all possible labels are known, $i.e.,$ $\mid\mathcal{L}\mid$ = $\mid$\textit{TrainingLabels}$\mid$ + $\mid$\textit{UnseenLabels}$\mid$. An \textit{openness} of 1 means that $\mid$\textit{TrainingLabels}$\mid$ = 0. This corresponds to a clustering context which is out of the scope of this work.

\subsection{Evaluation Technique}
Conventional evaluation techniques are not suitable for open-set classification and they do not present sufficient restrictions on the labels to simulate an open-set classification evaluation. We propose \textit{Leave-P-Class-Out-CrossValidation} for open-set classification. It allows to simulate an open-set classification where we do not have knowledge of all prediction classes. Algorithm \ref{alg:LeavePClassOut-CrossValidation} describes the general procedure of \textit{Leave-P-Class-Out-CrossValidation}. First, all possible combinations $\mathcal{C}$ of $\mathcal{P}$ labels from $\mathcal{L}$ are computed. In each iteration, one combination \textit{comb} is randomly chosen from $\mathcal{C}$ without replacement. All instances of a label $l_{comb}$, $\forall l_{comb}\in $\textit{comb}, are discarded from the dataset $\mathcal{D}$ to be directly added to the test set. These instances are referred to as the \textit{Leave-out-instances}. All labels in \textit{comb} are unseen in training but encountered in testing which simulates an open-set classification. A $\mathcal{N}$-fold-cross-validation is performed on the remaining instances, $\mathcal{D}\setminus$\textit{Leave-out-instances}, where in each cross validation the \textit{Leave-out-instances} are directly added to the test set. The evaluation is repeated until a maximum number of iterations $\alpha$ is reached or no more combination is possible. 
\begin{algorithm}[!t]
\caption{Leave-P-Class-Out-CrossValidation}
\label{alg:LeavePClassOut-CrossValidation}
\LinesNumbered
\KwData{$\mathcal{D}$: classification dataset, $\mathcal{L}$: the set of labels of $\mathcal{D}$, $\alpha$: maximum number of iterations, $\mathcal{P}$: number of labels to leave out in each iteration, $\mathcal{N}$: number of cross validation folds, $\mathcal{E}$: the open-set classifier}
\KwResult{$Scores$: classification scores}
\Begin{
$\mathcal{C} \leftarrow$ {All possible combinations of $\mathcal{P}$ labels form $\mathcal{L}$}\\

\While{($\alpha > 0$) and ($\mathcal{C}\neq\emptyset$)}{
Randomly chose a combination \textit{comb} from $\mathcal{C}$\\
\textit{Leave-out-instances}$\leftarrow$ all instances of $\mathcal{D}_{l_{comb}}\mid\forall l_{comb} \in$ \textit{comb}, \textit{comb} $\subseteq\mathcal{L}$\\
\ForEach{\textit{TrainingSet}, \textit{TestSet} $\in$ $\mathcal{N}$-CrossValidation($\mathcal{D}\setminus$\textit{Leave-out-instances})}{
Train($\mathcal{E}$, \textit{TrainingSet})\\
\textit{TestSet} $\leftarrow$ \textit{TestSet} $\cup$ \textit{Leave-out-instances}\\
\textit{PredictedLabels} $\leftarrow$ Predict($\mathcal{E}$, \textit{TestSet})\\
Scores $\leftarrow $ Scores $\cup$ Statistics(\textit{PredictedLabels})\\
}
$\mathcal{C} \leftarrow \mathcal{C}\setminus$\textit{comb}\\
$\alpha \leftarrow \alpha - 1$
}
}
\end{algorithm}
\subsection{Evaluation Measures}
The natural way to evaluate classification is to use the accuracy measure which refers to the amount of correctly classified instances from the dataset. 
However, in open-set classification, the negative set can extremely outnumber the positive set which inflates the accuracy results causing an over-estimation of the classifier's performance. Moreover, the number of testing classes is (at least theoretically) undefined. F-measure (also so-called f-score), which is the harmonic mean of precision and recall, is a good alternative for open-set classification. 
We use the weighted version of f-measure as the evaluation metric for our experiments. F-measure is computed for each label, then the results are averaged, weighted by the support of each label which makes it account for label imbalance.
\subsection{Experimental Protocol and Settings}
We apply a min-max normalization on each attribute independently such that no attribute will dominate in the prediction ($x_{normalized} = \frac{x - min}{max-min}$, where min and max are the minimum and maximum values for the attribute vector). In each experiment, we use Galaxy-X to classify a dataset in a simulated open-set classification using the Leave-P-Class-Out-CrossValidation. The number of iterations $\alpha$ is set to 10 and the number of cross validations in each iteration is 5. We evaluate the classification in terms of weighted f-measure using incremental values of \textit{openness}. We compare with the gold standard multi-class classifier One-vs.-Rest \cite{rocha_2014} using a linear SVM (OvR-SVM), and with the open-set multi-class classifier One-vs.-Set SVM (OvS-SVM) \cite{walter_2013}. OvS-SVM is used with the default parameters as requested by the authors, where the generalization/specialization of the hyper-planes are performed automatically through an iterative greedy optimization of the classification risk. We also build a two-step open-set multi-class classifier, termed OCSVM+OvR-SVM, based on \cite{Landgrebe_2005} and \cite{Tax_2008} as discussed in Section \ref{sec:related_work}. In the first step of OCSVM+OvR-SVM, a one-class SVM (OCSVM) with an RBF kernel is trained on all training instances considered as a single super-class. For OCSVM, instances that deviate from the super-class are labeled as "unknown". Otherwise, the instance is passed to OvR-SVM where the latter is trained on the original training classes using a linear SVM. For Galaxy-X, we use the same closed-set classifier as OvR-SVM, OvS-SVM, and OCSVM+OvR-SVM ($i.e.$ SVM with a linear kernel). We show results of Galaxy-SVM using a fixed softening value $\delta$=-0.3 ($i.e.$ -30\% in terms of class radius). We also show results of H-Galaxy-SVM (for Hyper Galaxy-SVM) using $\delta^*$ for each openness where $\delta^*$ is obtained through a greedy search within a range of [-0.5, 0.5] with a step size of 0.1. The used distance measure for our approach is the euclidean distance. Galaxy-X is not limited to SVM but it can use other closed-set classifiers as well. In contrast, OvS-SVM is restricted to the SVM framework. Thus, we use SVM for Galaxy-X, OvR-SVM and OCSVM+OvR-SVM for consistency.

\section{Results and Discussion}\label{sec:results_discussion}
\subsection{Evaluation on Classification of Handwriting Digits} \label{subsec:evaluation_handwriting_digits}
We first evaluate our approach on a dataset of handwriting digits\footnote{http://scikit-learn.org/stable/auto\_examples/datasets/plot\_digits\_last\_image.html}. The dataset is composed of 1797 instances divided in 10 classes representing the Arabic digits. Each instance is an 8x8 image of a handwriting digit, and thus it is represented by a vector of 64 features of values between 0 and 16 respectively to the gray-scale color of the feature in the image. 
As the dataset is multidimensional, we use the t-distributed Stochastic Neighbor Embedding (t-SNE) \cite{VanDerMaaten_2014} to visualize the distribution of its instances. 
Figure \ref{fig:digits_visualization} shows a t-SNE visualization of the dataset where each data point is colored according to its ground truth class membership. We highlight the separability of the clusters where each class can approximately be completely distinguished from the rest of the dataset.
\begin{figure}[!t]
    \centering
        \includegraphics[width=.35\linewidth]{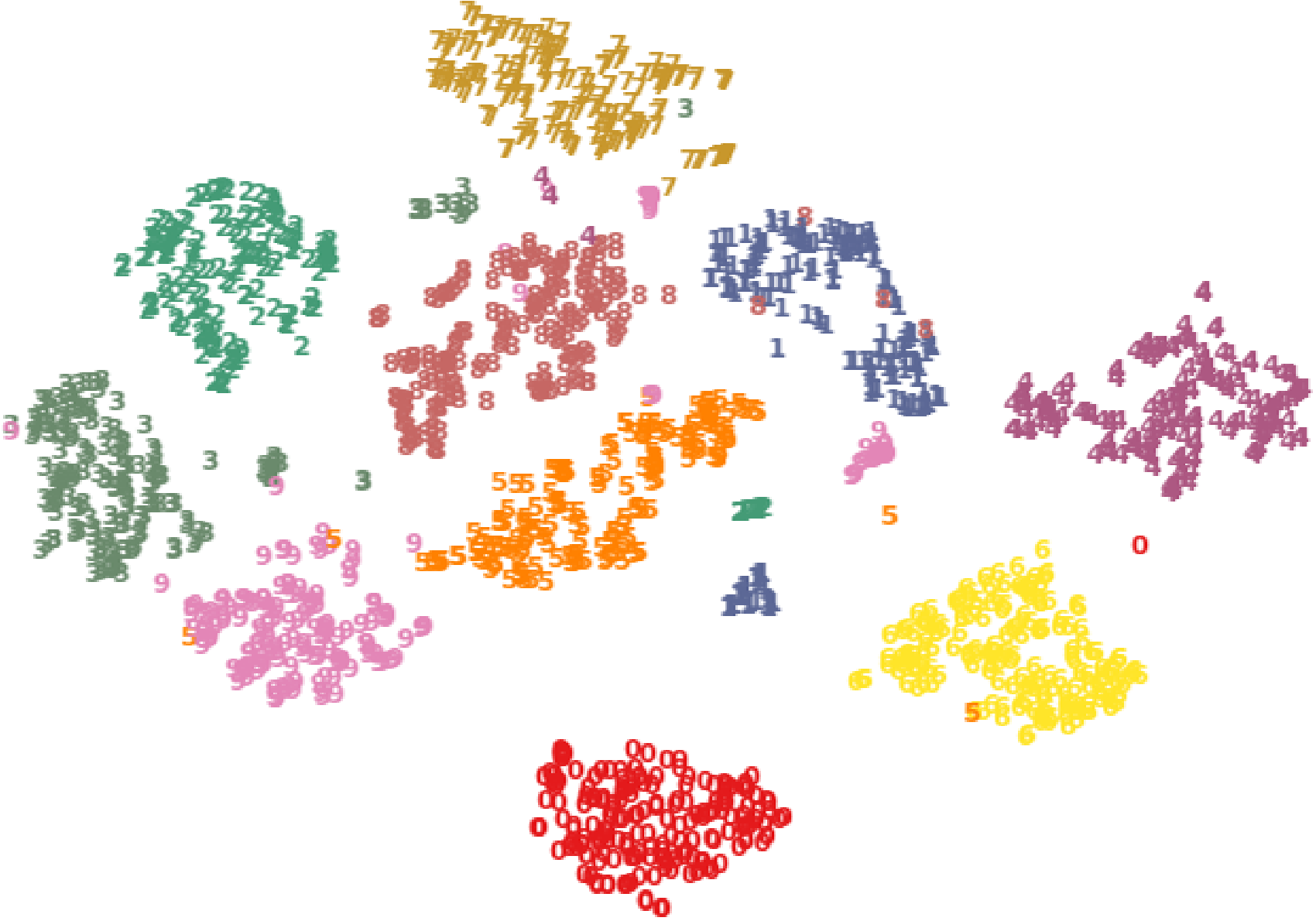}
    \caption{T-SNE visualization of the handwriting digits dataset. Colors are according to the ground truth class membership of the instances.}\label{fig:digits_visualization}
\end{figure}

Figure \ref{fig:digits_softening_SVM} shows f-measure results of Galaxy-SVM using different $\delta$ values in a simulated open-set classification of \textit{openness}=0.5, meaning that only 5 classes are seen in training and all the 10 classes are encountered in prediction. The obtained results are compared with those of SVM. The classification performance of SVM is very low as the classifier is at least incapable of correctly classifying the 5 classes that were unseen in training. SVM assigns one training label to all test instances of the 5 unknown classes leading to a misclassification. Galaxy-SVM highly outperforms SVM in terms of f-measure in the best case. However, with higher values of softening, the performance of Galaxy-SVM leans toward that of SVM. This is due to the effect of over-generalization since the bounding hyper-spheres become progressively larger with higher $\delta$ values until they completely overlap. In this setting, no rejection will be performed and only the local closed-set classifier ($i.e.,$ SVM) will be used to classify all instances. With lower $\delta$ values, the hyper-spheres become tighter adding more specialization to the class models. This allows Galaxy-SVM to better reject instances that do not resemble the overall distribution of training classes. However, the value of $\delta$ should be carefully specified since an over-specialization leads to a high distortion of the models making them incapable of covering the variance of training classes. The value of $\delta^*$ is the one that guarantees the highest f-measure representing the best trade-off between generalization and specialization for the classification scenario.
\begin{figure}[!t]
\centering
\includegraphics[width=0.5\textwidth]{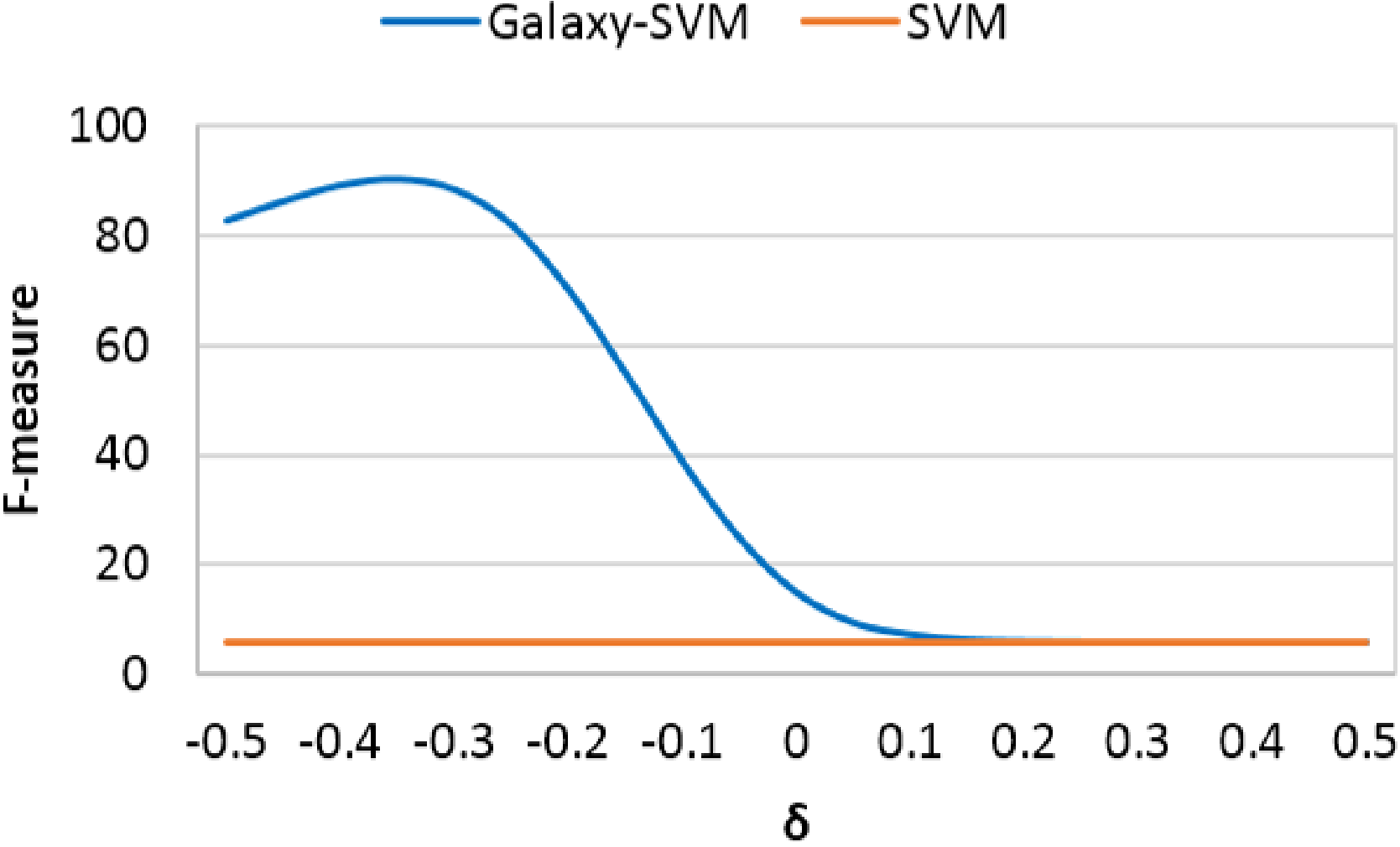}
\caption{F-measure performance in open-set classification of the handwriting digits for OvR-SVM and Galaxy-SVM with \textit{openness}=0.5 and using different $\delta$ values.}\label{fig:digits_softening_SVM}
\end{figure}
Figure \ref{fig:digits_classif} shows the classification performance in terms of f-measure for H-Galaxy-SVM (using $\delta^*$ in each iteration), Galaxy-SVM (with a fixed $\delta$ = -0.3), OvS-SVM, OCSVM+OvR-SVM, and OvR-SVM using different \textit{openness} values. The value of \textit{openness} ranges from 0 to 0.8 corresponding to a number of held-out classes ($P$) from 0 to 8 that is used in the Leave-P-Class-Out-CrossValidation. An \textit{openness} value of 0 corresponds to a closed-set classification meaning that all testing classes are seen in training. In this case, the classification performance of all classifiers are approximately the same since in the absence of rejected classes they all perform at least as good as the used closed-set classifier, $i.e.$ SVM. \textit{Openness} values from 0.1 to 0.8 correspond to open-set classification. The downward tendency of OvR-SVM is clear with higher \textit{openness} values. In fact, the more open the classification is, the more false assignments OvR-SVM will generate. In contrast, all open-set classifiers maintained higher f-measure performance than OvR-SVM due to their ability to reject instances from unseen classes. Both H-Galaxy-SVM and Galaxy-SVM ($\delta$=-0.3) outperformed all the other approaches in open-set classification scenarios. Using a fixed $\delta$ value of -0.3, Galaxy-SVM was able to give very close results to those of H-Galaxy-SVM meaning that $\delta^*\approx\delta$ in all cases. Overall, f-measure results of OCSVM+OvR-SVM were higher than those of OvS-SVM except for \textit{openness}=0.2 where they gave similar results and for \textit{openness}=0.1 where OvS-SVM outperformed OCSVM+OvR-SVM. Figure \ref{fig:digits_rejection} shows rejection f-measure results on the held-out classes for each \textit{openness} value. H-Galaxy-SVM and Galaxy-SVM scored better rejection f-measure than OvS-SVM and OCSVM+OvR-SVM in all cases providing the best trade-off between rejection-precision and rejection-recall. It is worth noting that although OCSVM+OvR-SVM and OvS-SVM used the same closed-set classifier ($i.e.$ SVM), in opposite to f-measure results for \textit{openness}=0.1 and \textit{openness}=0.2, OCSVM+OvR-SVM provided better rejection f-measure than OvS-SVM. A possible explanation for these results can be that in apposite to OCSVM+OvR-SVM, OvS-SVM performs an additional hyper-plane optimization for SVM. While OCSVM+OvR-SVM was more accurate in rejecting unknown instances than OvS-SVM, the latter provided a more accurate multi-class classification for the known classes in \textit{openness}=0.1 and \textit{openness}=0.2 settings.
\begin{figure}[!t]
\centering
    \begin{subfigure}[!t]{1\textwidth}
        \centering
        \includegraphics[width=.9\linewidth]{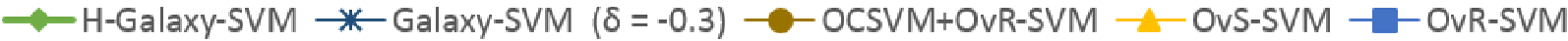}
    \end{subfigure}%

    \begin{subfigure}[!t]{0.45\textwidth}
        \centering
        \includegraphics[width=1\linewidth]{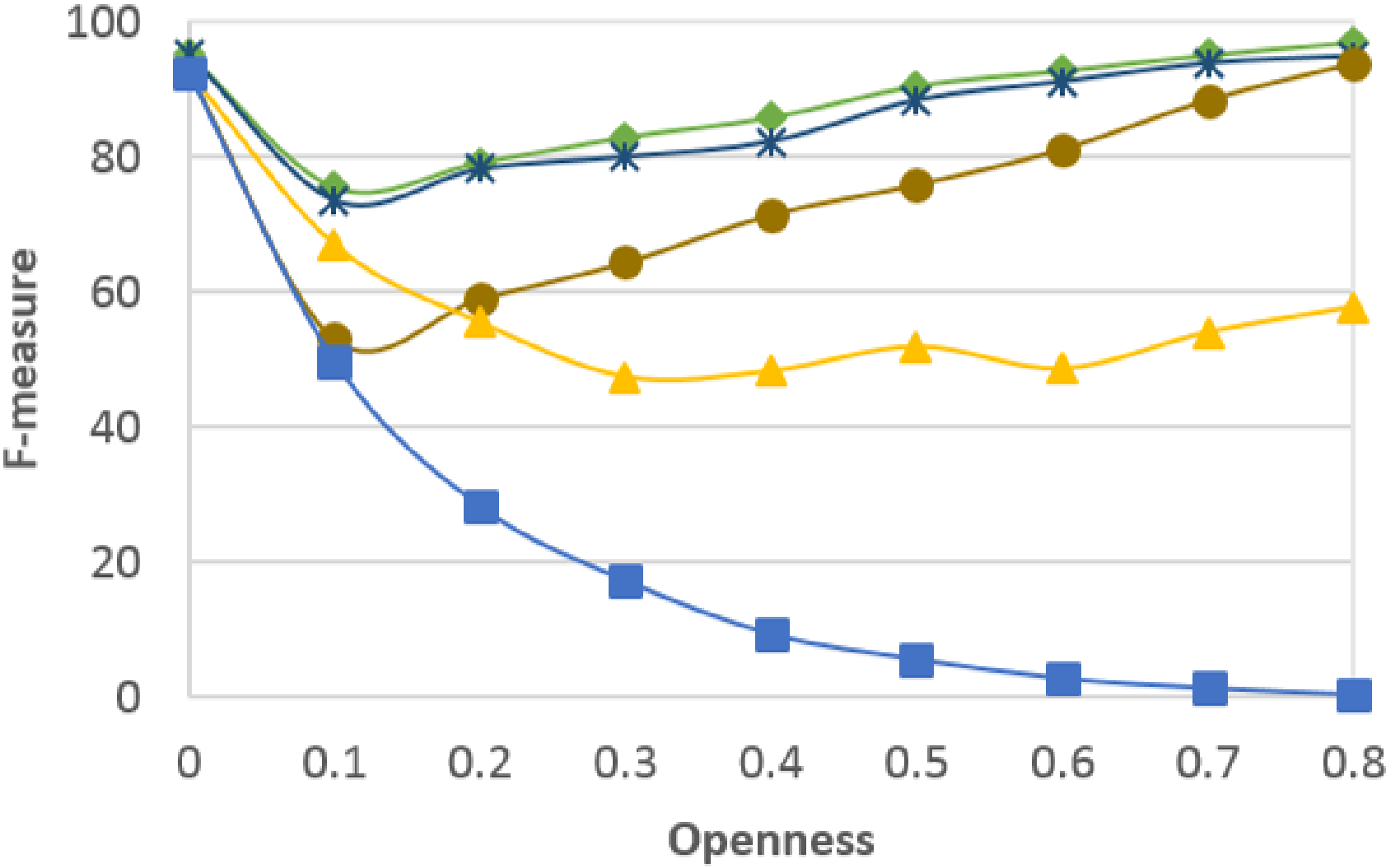}
        \caption{}\label{fig:digits_classif}
    \end{subfigure}%
    ~
    \begin{subfigure}[!t]{0.45\textwidth}
        \centering
        \includegraphics[width=1\linewidth]{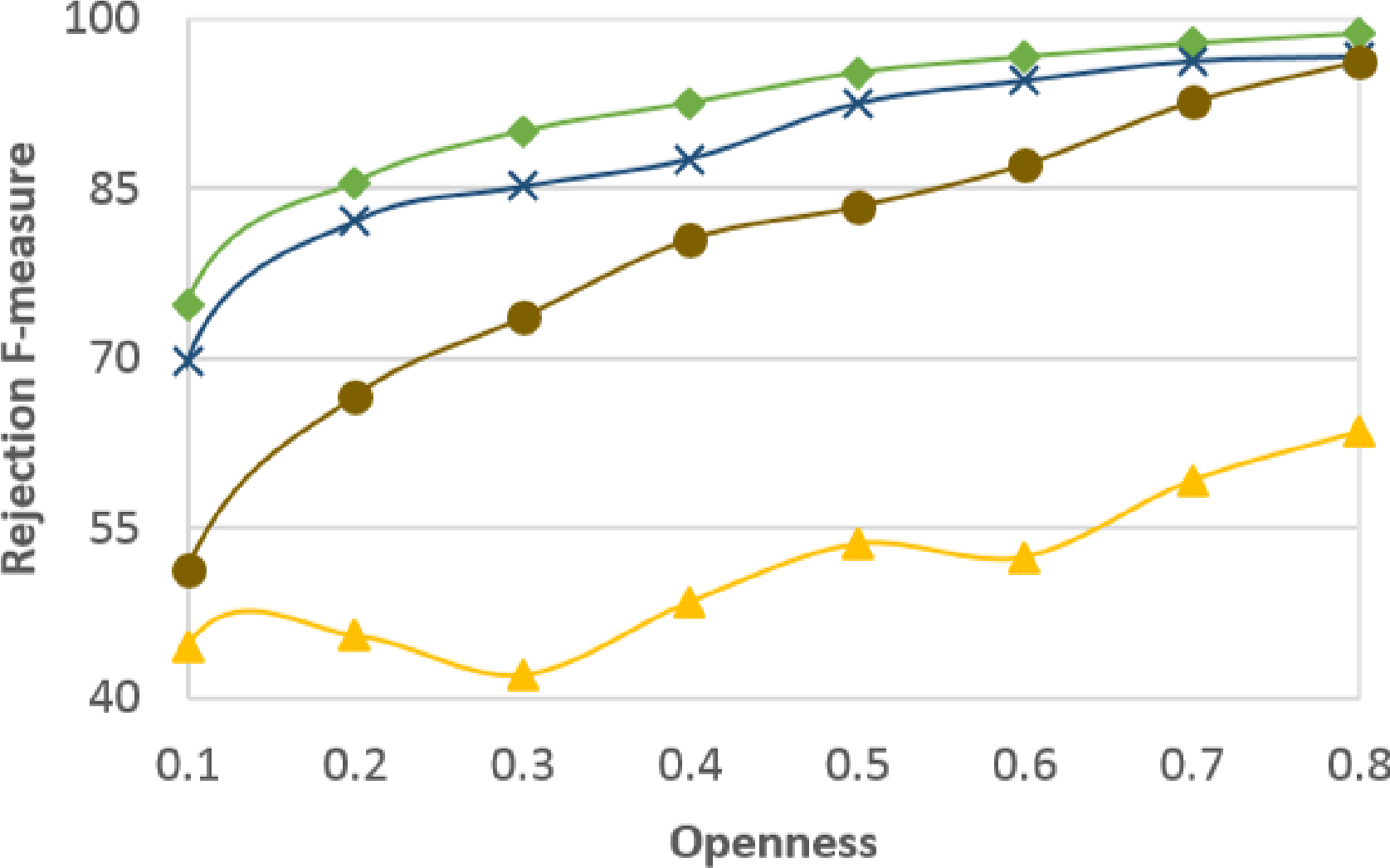}
        \caption{}\label{fig:digits_rejection}
    \end{subfigure}
\caption{F-measure (a) and rejection f-measure (b) results of H-Galaxy-SVM, Galaxy-SVM ($\delta$=-0.3), OvS-SVM, OCSVM+OvR-SVM and OvR-SVM in open-set classification of the handwriting digits dataset with different \textit{openness} values.}\label{fig:digits_classif_SVM}
\end{figure}
\subsection{Evaluation on Face Recognition}
We evaluate Galaxy-SVM on face recognition using the Olivetti faces dataset from AT\&T Laboratories Cambridge\footnote{http://www.cl.cam.ac.uk/research/dtg/attarchive/facedatabase.html}. This dataset consists of a set of 400 pictures, 10 pictures each of 40 individuals. The pictures were taken at different times, varying the lighting, facial expressions and facial details. Each picture is of a size of 64x64 resulting in a feature vector of 4096 values of gray levels. 
The task is to identify the identity of the pictured individuals. Figure \ref{fig:olivetti_visualization} shows a t-SNE visualization of the dataset. In opposite to the previous dataset, we notice a high inter-class overlapping making it difficult to isolate each class separately. With so many classes, such inter-class overlapping, and only 10 examples per class, the classification of this dataset is very challenging. Transforming this dataset into an open-set recognition task makes the classification even more challenging.
\begin{figure}[!t]
        \centering
        \includegraphics[width=.35\linewidth]{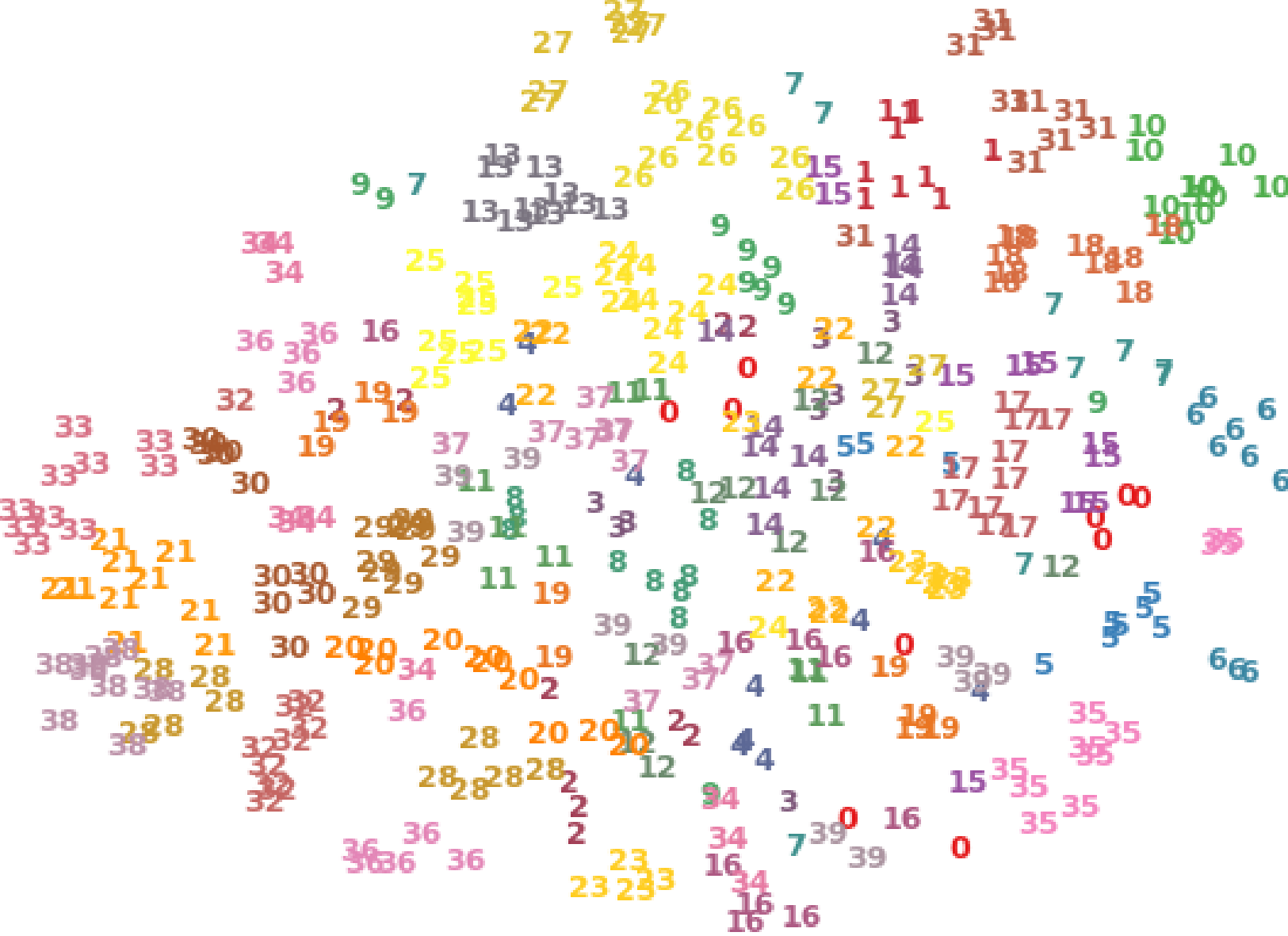}
    \caption{T-SNE visualization of the Olivetti faces dataset. Colors are according to the ground truth class membership of the instances.}\label{fig:olivetti_visualization}
\end{figure}
%

Figure \ref{fig:olivetti_classif_SVM} shows the classification f-measure (Figure \ref{fig:olivetti_classif}) and rejection f-measure (Figure \ref{fig:olivetti_rejection}) for H-Galaxy-SVM (using $\delta^*$), Galaxy-SVM (with a fixed $\delta$=-0.3), OvS-SVM, OCSVM+OvR-SVM, and OvR-SVM using different \textit{openness} values. The value of \textit{openness} ranges from 0 to 0.8 corresponding to a number of held-out classes ($P$) from 0 to 32 with a step size of 4. As shown in the figure, Galaxy-SVM handles higher values of \textit{openness} better than all the other approaches. Indeed, even at an extreme \textit{openness} value of 0.8 corresponding to only 8 training classes and 40 testing classes comprising 32 classes that were unseen in training, Galaxy-SVM was able to classify known as well as unknown class instances with high f-measure of almost 95\%. H-Galaxy-SVM outperformed all the other approaches in open-set classification cases. H-Galaxy-SVM and Galaxy-SVM ($\delta$=-0.3) gave close results for open-set classification cases except for \textit{openness}=0.1 where H-Galaxy-SVM performed better. This can be explained by the fact that in that case more generalization was needed whereas Galaxy-SVM ($\delta$=-0.3) performed a specialization of -0.3. This conclusion is supported by the f-measure result of the closed-set classifier OvR-SVM in that case, where it outperformed Galaxy-SVM ($\delta$=-0.3) with no rejection at all. Even though H-Galaxy-SVM and OvS-SVM used the same closed-set classifier (SVM) and gave very similar results in terms of rejection f-measure, H-Galaxy-SVM outperformed OvS-SVM in terms of classification f-measure in all open-set classification cases. This is due to the difference between the class representation models used in each approach. Our approach isolates each class from the rest of the classification universe from all sides. However, OvS-SVM defines two hyper-planes for each class that delimit the latter from only two sides in feature space. In this setting, the class "acceptance space" is left unlimited within the region between the hyper-planes as discussed in Section \ref{sec:related_work}. The classification performance of OCSVM+OvR-SVM compared to that of OvS-SVM is in contrast to that obtained with the handwriting digits dataset. Indeed OvS-SVM outperformed OCSVM+OvR-SVM in most open-set classification cases of the Olivetti faces dataset. This is due to the high inter-class overlapping that prevents OCSVM from efficiently isolating the training classes (as one super-class) from the overlapping unknown classes. This is clearly illustrated in the rejection f-measure results in Figure \ref{fig:olivetti_rejection} where OCSVM+OvR-SVM scored lower than all the other approaches.
\begin{figure}[!]
\centering
    \begin{subfigure}[!t]{1\textwidth}
        \centering
        \includegraphics[width=.9\linewidth]{legend_all.eps}
    \end{subfigure}%
    
    \begin{subfigure}[!t]{0.45\textwidth}
        \centering
        \includegraphics[width=1\linewidth]{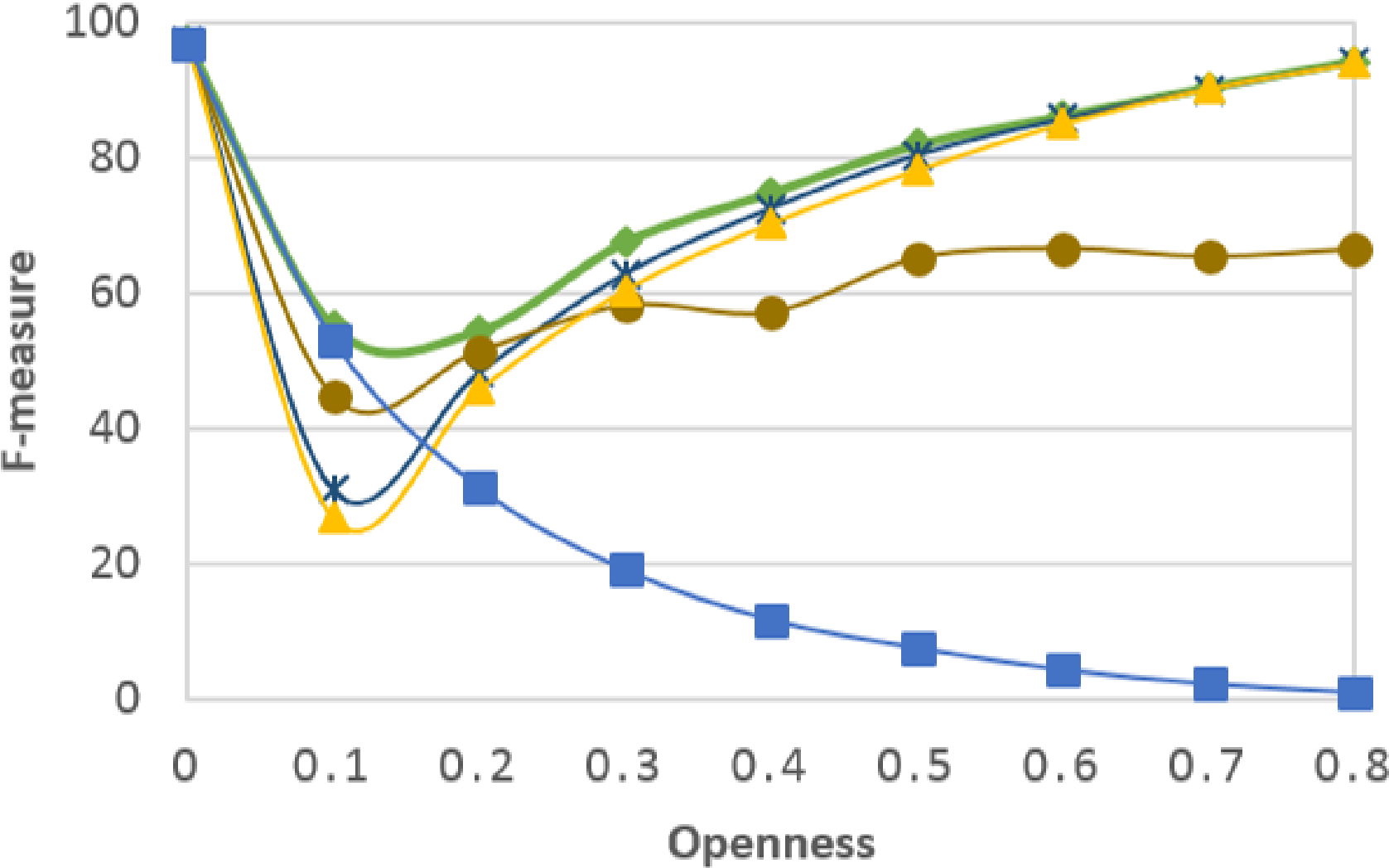}
        \caption{}\label{fig:olivetti_classif}
    \end{subfigure}%
    ~
    \begin{subfigure}[!t]{0.45\textwidth}
        \centering
        \includegraphics[width=1\linewidth]{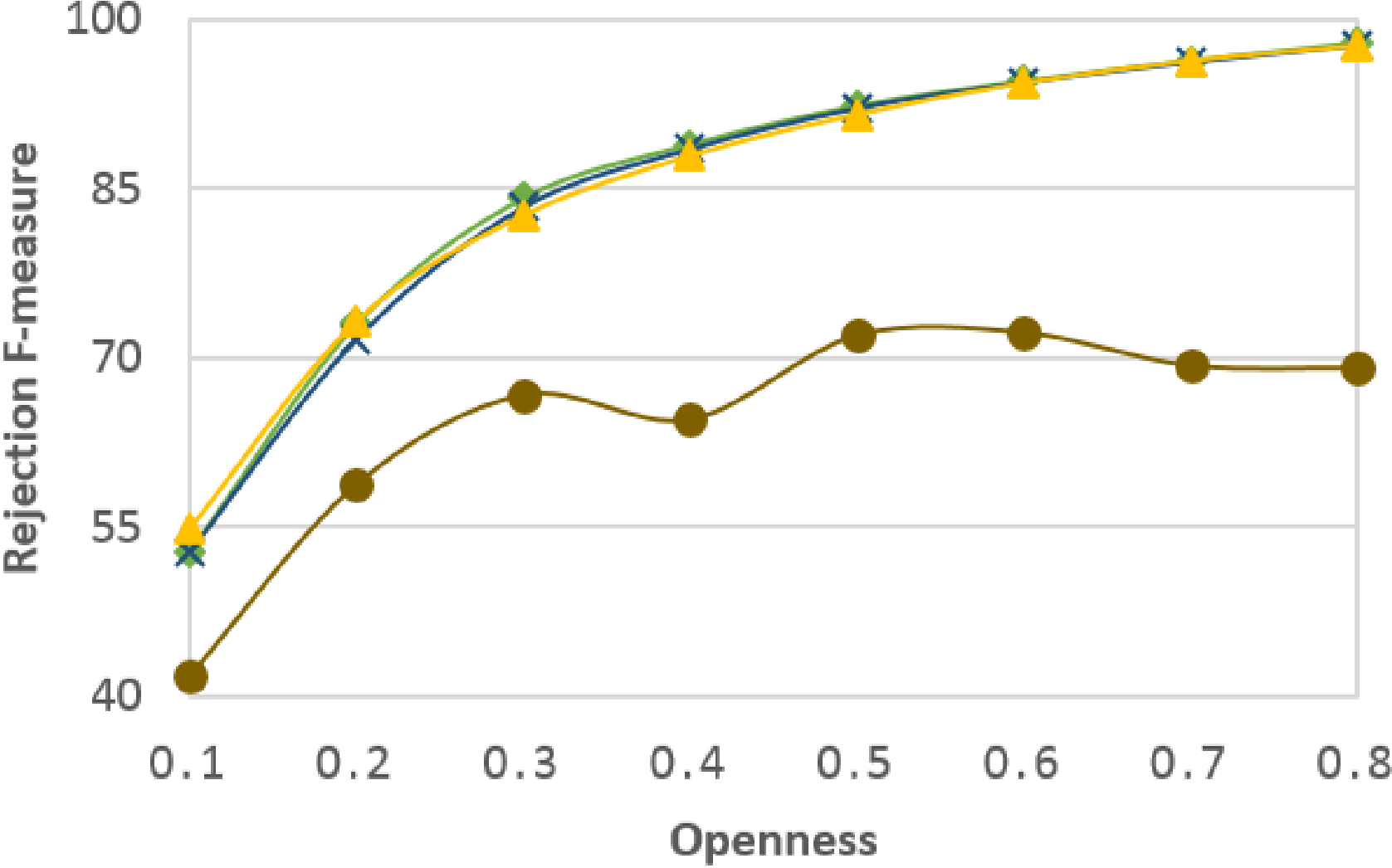}
        \caption{}\label{fig:olivetti_rejection}
    \end{subfigure}

\caption{F-measure (a) and rejection f-measure (b) results of H-Galaxy-SVM, Galaxy-SVM ($\delta$=-0.3), OvS-SVM, OCSVM+OvR-SVM and OvR-SVM in open-set classification of the Olivetti faces dataset with different \textit{openness} values.}\label{fig:olivetti_classif_SVM}
\end{figure}
\section{Conclusion}
In this paper, we addressed a fundamental problem in machine learning and data mining namely open-set classification. 
In many real-world applications where the closed-world hypothesis does not hold, it is important to prevent misclassifying instances that do not resemble any known class distribution, and raise the attention of experts to address them separately. We introduced Galaxy-X, an open-set multi-class classifier. Galaxy-X encapsulates each class with a minimum bounding hyper-sphere that allows it to distinguish instances that resemble training classes from those that are of unknown ones. Galaxy-X presents a high flexibility through a softening parameter that allows extending or shrinking class boundaries to add more generalization or specialization to the classification models. Experimental results on the classification of handwriting digits and face recognition show the efficiency of Galaxy-X in open-set classification compared to gold standard approaches from the literature. 
An interesting future work is to propose a model for non spherical classes in order to avoid the risk of over-generalization in empty regions of the hyper-sphere. 


\end{document}